\def\1{\bm{1}}
\def\eps{{\epsilon}}
\DeclareMathAlphabet{\mathsfit}{\encodingdefault}{\sfdefault}{m}{sl}
\SetMathAlphabet{\mathsfit}{bold}{\encodingdefault}{\sfdefault}{bx}{n}
\newcommand{\E}{\mathbb{E}}
\newcommand{\R}{\mathbb{R}}
\newcommand{\Id}{\text{\it Id}}
\newtheorem{theorem}{Theorem}
\newtheorem{definition}{Definition}
\definecolor{ForestGreen}{RGB}{34,139,34}
\title{Multimarginal generative modeling\\ with stochastic interpolants}
\author{Michael S.~Albergo \\
Center for Cosmology and Particle Physics\\
New York University\\
New York, NY 10003, USA \\
\texttt{albergo@nyu.edu} \\
\And
Nicholas M. Boffi  \\
Courant Institute of Mathematical Sciences \\
New York University \\
New York, NY 10012, USA \\
\texttt{boffi@cims.nyu.edu} \\
\And
Michael Lindsey  \\
Department of Mathematics \\
University of California, Berkeley \\
Berkeley, CA 94720, USA \\
\texttt{lindsey@math.berkeley.edu}
\And
Eric Vanden-Eijnden  \\
Courant Institute of Mathematical Sciences \\
New York University \\
New York, NY 10012, USA \\
\texttt{eve2@cims.nyu.edu} \\
}
\definecolor{amaranth}{rgb}{0.9, 0.17, 0.31}
\definecolor{airforceblue}{rgb}{0.36, 0.54, 0.66}
\newcommand{\bfa}{\alpha}
\begin{document}

\maketitle

\begin{abstract}

Given a set of $K$ probability densities, we consider the multimarginal generative modeling problem of learning a joint distribution that recovers these densities as marginals. The structure of this joint distribution should identify multi-way correspondences among the prescribed marginals.
We formalize an approach to this task within a generalization of the stochastic interpolant framework, leading to efficient learning algorithms built upon dynamical transport of measure.
Our generative models are defined by velocity and score fields that can be characterized as the minimizers of simple quadratic objectives, and they are defined on a simplex that generalizes the time variable in the usual dynamical transport framework.
The resulting transport on the simplex is influenced by all marginals, and we show that multi-way correspondences can be extracted.
The identification of such correspondences has applications to style transfer, algorithmic fairness, and data decorruption. 
In addition, the multimarginal perspective enables an efficient algorithm for reducing the dynamical transport cost in the ordinary two-marginal setting.
We demonstrate these capacities with several numerical examples. 

\end{abstract}

\section{Introduction}
\label{sec:intro}
Generative models built upon dynamical transport of measure, in which two probability densities are connected by a learnable transformation, underlie many recent advances in unsupervised learning  \citep{rombach2022highresolution,dhariwal2021diffusion}.
Contemporary methods such as normalizing flows~\citep{rezende_variational_2015} and diffusions~\citep{song2021scorebased} transform samples from one density $\rho_0$ into samples from another density $\rho_1$ through an ordinary or stochastic differential equation (ODE/SDE). In such frameworks one must learn the velocity field defining the ODE/SDE. 
One effective algorithm for learning the velocity is based on the construction of a \textit{stochastic interpolant}, a stochastic process that interpolates between the two probability densities at the level of the individual samples. The velocity can be characterized conveniently as the solution of a tractable square loss regression problem.
In conventional generative modeling, one chooses $\rho_0$ to be an analytically tractable reference density, such as the standard normal density, while $\rho_1$ is some target density of interest, accessible only through a dataset of samples.
In this setting, a general stochastic interpolant $x_t$ can be written as
\begin{equation}
    \label{eqn:stoch_interp_2m}
    x_t = \alpha_0(t) x_0 + \alpha_1(t) x_1,
\end{equation}
where $x_0 \sim \rho_0$ and $x_1\sim \rho_1$ and we allow for the possibility of dependence between $x_0$ and $x_1$. Meanwhile $\alpha_0(t)$ and $\alpha_1(t)$ are differentiable functions of~$t\in[0,1]$ such that $\alpha_0(0)=\alpha_1(1)=1$ and $\alpha_0(1)=\alpha_1(0)=0$. These constraints guarantee that $x_{t=0} = x_0$ and $x_{t=1}=x_1$ by construction. It was shown in \cite{albergo2023building,albergo2023stochastic} that if $X_t$ is the solution of the ordinary differential equation (ODE)
\begin{equation}
    \label{eq:prob:flow}
    \dot{X}_t = b(t, X_t),
\end{equation} 
with velocity $b$ defined by
\begin{equation}
    \label{eqn:marginal_decomp}
    b(t, x)  = \mathbb E[\dot x_t | x_t = x] = \dot{\alpha}_0(t) \, \E[x_0 | x_t=x] + \dot{\alpha}_1(t)  \, \E[x_1 | x_t=x]
\end{equation}
and initial condition $X_0$ drawn from $\rho_0$, 
then $X_t$ matches $x_t$ in law for all times $t\in[0,1]$.  Hence we can sample from $\rho_1$ by generating samples from $\rho_0$ and propagating them via the ODE~\eqref{eq:prob:flow} to time $t=1$.
Equivalent diffusion processes depending on the score can also be obtained by the introduction of noise into the interpolant~\citep{albergo2023stochastic}.

A significant body of research on both flows and diffusions has studied how to $\alpha_0(t)$ and $\alpha_1(t)$ that reduce the computational difficulty of integrating the resulting velocity field $b$~\citep{karras_elucidating_2022}.

In this work, we first observe that the decomposition of the velocity field of~\eqref{eqn:stoch_interp_2m}  into conditional expectations of samples from the marginals $\rho_0$ and $\rho_1$ suggests a more general definition of a process $x(\alpha)$, defined not with respect to the scalar $t\in[0,1]$ but with respect to an \textit{interpolation coordinate} $\alpha = (\alpha_0, \alpha_1)$, 
\begin{equation}
    \label{eqn:stoch_interp_xalpha}
    x(\alpha) = \alpha_0 x_0 + \alpha_1 x_1.
\end{equation}
By specifying a curve $\alpha(t)$, we can recover the interpolant in~\eqref{eqn:stoch_interp_2m} with the identification $x_t = x(\alpha(t))$.
We use the generalized perspective in~\eqref{eqn:stoch_interp_xalpha} to identify the minimal conditions upon $\alpha$ so that the density $\rho(\alpha,x)$ of $x(\alpha)$ is well-defined for all $\alpha$, reduces to $\rho_0(x)$ for $\alpha = (1, 0)$, and to $\rho_1(x)$ for $\alpha = (0, 1)$.
These considerations lead to several new advances, which we summarize as our \textbf{main contributions}:

\begin{enumerate}[leftmargin=0.15in]
    \item We show that the introduction of an interpolation coordinate decouples the \textit{learning problem} of estimating a given $b(t, x)$ from the \textit{design problem} for a path $\alpha(t)$ governing the transport. 
    We use this to devise an optimization problem over curves $\alpha(t)$ with the Benamou-Brenier transport cost, which gives a geometric algorithm for selecting a performant $\alpha$.
    \item By lifting $\alpha$ to a higher-dimensional space, we use the stochastic interpolant framework to devise a generic paradigm for multimarginal generative modeling. To this end, we derive a generalized probability flow valid among $K+1$ marginal densities $\rho_0, \rho_1, \hdots, \rho_K$, whose corresponding velocity field $b(t,x)$ is defined via the conditional expectations $\E[x_k | x(\alpha) = x]$ for $k=0,1,\ldots, K$ and a curve $\alpha(t)$. 
    We characterize these conditional expectations as the minimizers of square loss regression problems, and show that one gives access to the score of the multimarginal density.
    \item We show that the multimarginal framework allows us to solve $K \choose 2$ marginal transport problems using only $K$ marginal vector fields. Moreover, this framework naturally learns multi-way correspondences between the individual marginals, as detailed in \cref{sec:exp}.  The method makes possible concepts like all-to-all image-to-image translation, where we observe an emergent style transfer amongst the images.
    \item Moreover, in contrast to existing work, we consider multimarginal transport from a novel \textit{generative} perspective, demonstrating how to generate joint samples $(x_0, \ldots , x_K)$ matching prescribed marginal densities and generalizing beyond the training data.
\end{enumerate}

The structure of the paper is organized as follows.
In \cref{sec:related}, we review some related works in multimarginal optimal transport and two-marginal generative modeling.
In \cref{sec:theo}, we introduce the interpolation coordinate framework and detail its ramifications for multimarginal generative modeling.
In \cref{sec:OT}, we formulate the path-length minimization problem and illustrate its application.
In \cref{sec:distill}, we consider a limiting case where the framework directly give one-step maps  between any two marginals. 
In \cref{sec:exp}, we detail experimental results in all-to-all image translation and style transfer.
We conclude and discuss future directions in \cref{sec:conc}.

\subsection{Related works}
\label{sec:related}

\paragraph{Generative models and dynamical transport}
Recent years have seen an explosion of progress in generative models built upon dynamical transport of measure.
These models have roots as early as~\citep{tabak_density_2010, tabak2013}, and originally took form as a discrete series of steps~\citep{rezende_variational_2015, dinh_density_2017, huang_deep_2016, durkan2019}, while modern models are typically formulated via a continuous-time transformation.
A particularly notable example of this type is score-based diffusion~\citep{song_scorebased_2021, song2021mle, Song2019}, along with related methods such as denoising diffusion probabilistic models~\citep{ho2020, kingma2021on}, which generate samples by reversing a stochastic process that maps the data into samples from a Gaussian base density.
Methods such as flow matching~\citep{lipman2022, tong2023improving}, rectified flow~\citep{liu2022-ot, liu2022}, and stochastic interpolants~\citep{albergo2023building, albergo2023stochastic} refine this idea by connecting the target distribution to an arbitrary base density, rather than requiring a Gaussian base by construction of the path, and allow for paths that are more general than the one used in score-based diffusion.
Importantly, methods built upon continuous-time transformations that posit a connection between densities often lead to more efficient learning problems than alternatives.
Here, we continue with this formulation, but focus on generalizing the standard framework to a multimarginal setting.

\paragraph{Multimarginal modeling and optimal transport}
Multimarginal problems are typically studied from an optimal transport perspective~\citep{pass_multi-marginal_2014}, where practitioners are often interested in computation of the Wasserstein barycenter~\citep{cuturi_fast_2014, agueh_barycenters_2011, altschuler_wasserstein_nodate}.
The barycenter is thought to lead to useful generation of combined features from a set of datasets~\citep{rabin_wasserstein_2012}, but its computation is expensive, leading to the need for approximate algorithms.
Some recent work has begun to fuse the barycenter problem with dynamical transport, and has developd algorithms for its computation based on the diffusion Schr\"odinger bridge~\citep{noble_tree-based_2023}.

A significant body of work in multimarginal optimal transport has concerned the existence of an optimal transport plan of Monge type
~\citep{GangboSwiech1998MMOT,Pass2014MMOT}, as well as generalized notions~\citep{FrieseckeVogler2018MMOT}. A Monge transport plan is a joint distribution of the restricted form~\eqref{eq:rho:map} considered below and in particular yields a set of compatible deterministic correspondences between marginal spaces.  We refer to such a set of compatible correspondences as a multi-way correspondence. Although the existence of Monge solutions in multimarginal optimal transport is open in general~\citep{Pass2014MMOT}, in the setting of the quadratic cost that arises in the study of Wasserstein barycenters, in fact a Monge solution does exist under mild regularity conditions~\citep{GangboSwiech1998MMOT}.

We do not compute multimarginal optimal transport solutions in this work, but nonetheless we will show how to extract an intuitive multi-way correspondence from our learned transport. We will also show that if the multimarginal stochastic interpolant is defined using a coupling of Monge type, then this multi-way correspondence is uniquely determined.

\section{Multimarginal stochastic interpolants}
\label{sec:theo}
In this section, we study stochastic interpolants built upon an interpolation coordinate, as illustrated in~\eqref{eqn:stoch_interp_xalpha}.
We consider the multimarginal generative modeling problem, whereby we have access to a dataset of $n$ samples $\{x_k^i\}^{i=1, \hdots, n}_{k=0, \hdots, K}$ from each of $K+1$ densities with $x_k^i \sim \rho_k$.
We wish to construct a generative model that enables us to push samples from any $\rho_j$ onto samples from any other $\rho_k$.
By setting $\rho_0$ to be a tractable base density such as a Gaussian, we may use this model to draw samples from any marginal; for this reason, we hereafter assume $\rho_0$ to be a standard normal.
We denote by $\Delta^K$ the $K$-simplex in $\R^{K+1}$, i.e.,  $\Delta^{K} = \{ \alpha=(\alpha_0,\ldots, \alpha_K) \in \mathbb R^{K+1} \ : \ \sum_{k=0}^K \alpha_k = 1 \ \text{and} \  \alpha_k \ge 0 \ \text{for} \ k=0,\ldots K \}$.
We begin with a useful definition of a stochastic interpolant that places $\alpha \in \Delta^K$:

\begin{definition}[Barycentric stochastic interpolant]
\label{def:bary:si}
Given $K+1$ probability density functions $\{\rho_k \}_{k=0}^K$ with full support on $\R^d$, the \textit{barycentric stochastic interpolant} $x(\bfa)$  with $\bfa = (\alpha_0, \dots, \alpha_K) \in \Delta^K$ is the stochastic process defined as 
\begin{equation}
    \label{eq:si}
    x(\bfa) = \sum_{k=0}^{K} x_k \alpha_k,
\end{equation}
where $(x_1, \dots, x_K)$ are jointly drawn from a probability density $\rho(x_1, \cdots, x_K)$ such that 
\begin{equation}
    \label{eq:rhok:cond}
    \forall k=1,\ldots, K \ : \quad \int_{\R^{(K-1)\times d}} \rho(x_1, \ldots, x_K) dx_1\cdots dx_{k-1} dx_{k+1} \cdots dx_K = \rho_k(x_k),
\end{equation}
and we set $x_0 \sim N(0,\text{\it Id}_d)$ independent of $(x_1, \dots, x_K)$.
\end{definition}

The barycentric stochastic interpolant emerges as a natural extension of the two-marginal interpolant~\eqref{eqn:stoch_interp_2m} with the choice $\alpha_0(t) = 1-t$ and $\alpha_1(t) = t$.
In this work, we primarily study the barycentric interpolant for convenience, but we note that our only requirement in the following discussion is that $\sum_{k=0}^K \alpha_k^2 > 0$.
This condition ensures that $x(\alpha)$ always contains a contribution from \textit{some} $x_k$, and hence its density $\rho(\alpha, \cdot)$ does not collapse to a Dirac measure at zero for any $\alpha$.
In the following, we classify $\rho(\alpha, \cdot)$ as the solution to a set of continuity equations.

\begin{restatable}[Continuity equations]{theorem}{cont}
\label{thm:bary:si:CE}
For all $\bfa \in \Delta^{K}$, the probability distribution of the barycentric stochastic interpolant $x(\bfa)$ has a density $\rho(\bfa,x)$ which satisfies the $K+1$  equations
\begin{align}
\label{eq:bary:si:CE:0}
    \partial_{\alpha_k} \rho(\alpha, x) + \nabla_x \cdot \big( g_k(\bfa,x) \rho(\alpha, x)\big) = 0, \qquad k=0,\ldots,K.
\end{align}
Above, each $g_k(\bfa,x)$ is defined as the conditional expectation
\begin{equation}
\label{eq:gk}
    g_k(\bfa, x) = \mathbb E [ x_k | x(\bfa) = x], \qquad k=0,\ldots,K,
\end{equation}
where $\mathbb E [ x_k | x(\bfa) = x]$ denotes an expectation over $\rho_0(x_0)\rho(x_1,\ldots,x_k)$ conditioned on the event $x(\bfa) = x$.
The score along each two-marginal path connected to $\rho_0$ is given by
\begin{equation}
\label{eq:score}
    \forall \alpha_0 \not =0 \ : \quad \nabla_x \log \rho(\bfa,x) = -\alpha_0^{-1} g_0(\bfa, x).
\end{equation}
Moreover, each $g_k$ is the unique minimizer of the objective
\begin{align}
    \label{eqn:g_obj}
    L_k(\hat g_k) = \int_{\Delta^{K}} \mathbb E \big [|\hat g_k(\bfa, x(\bfa))|^2 - 2x_k \cdot \hat g_k(\bfa, x(\bfa)) \big] d\bfa, \quad k=0,\ldots, K,
\end{align}
where the expectation $\mathbb E$ is taken over $(x_0,\ldots, x_K) \sim \rho_0(x_0)\rho(x_1, \dots, x_k)$. 
\end{restatable}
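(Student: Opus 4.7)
The plan is to prove the four claims---the continuity equations, the identification of the drifts as conditional expectations, the score identity, and the variational characterization---by working in weak form against smooth test functions and exploiting the explicit linear structure of $x(\bfa) = \sum_k \alpha_k x_k$. I would first extend the definition of $\rho(\bfa,x)$ to any $\bfa \in \R_{\ge 0}^{K+1}$ with $\sum_k \alpha_k^2 > 0$ so that the partial derivatives $\partial_{\alpha_k}$ make literal sense off the simplex, noting that on $\Delta^K$ the non-degeneracy $\sum_k \alpha_k^2 > 0$ is automatic.

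For the continuity equations, I would take $\phi \in C_c^\infty(\R^d)$ and write
\begin{equation*}
    \partial_{\alpha_k} \int \phi(x)\, \rho(\bfa, x)\, dx = \partial_{\alpha_k} \E[\phi(x(\bfa))] = \E\bigl[x_k \cdot \nabla \phi(x(\bfa))\bigr],
\end{equation*}
using the chain rule and $\partial_{\alpha_k} x(\bfa) = x_k$. The tower property applied to $x(\bfa)$ rewrites this as $\E\bigl[g_k(\bfa, x(\bfa)) \cdot \nabla \phi(x(\bfa))\bigr] = \int g_k(\bfa, x)\cdot \nabla\phi(x)\, \rho(\bfa, x)\, dx$. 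Integration by parts in $x$ yields the weak form of~\eqref{eq:bary:si:CE:0}, with $g_k$ identified as in~\eqref{eq:gk}. This simultaneously proves the first two assertions.

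For the score identity~\eqref{eq:score}, I would exploit that $x_0 \sim N(0, \Id_d)$ is independent of $(x_1, \dots, x_K)$. Writing $Y = \sum_{k\ge 1} \alpha_k x_k$ with density $\rho_Y$, for $\alpha_0 > 0$ the density of $x(\bfa) = \alpha_0 x_0 + Y$ equals the convolution $\rho(\bfa, x) = \int \phi_{\alpha_0}(x - y)\, \rho_Y(y)\, dy$, where $\phi_{\alpha_0}$ denotes the $N(0, \alpha_0^2 \Id_d)$ density. The identity $\nabla \phi_{\alpha_0}(u) = -\alpha_0^{-2} u\,\phi_{\alpha_0}(u)$, together with the substitution $\alpha_0 x_0 = x(\bfa) - Y$, gives
\begin{equation*}
    \nabla_x \rho(\bfa, x) = -\alpha_0^{-2}\int (x-y)\,\phi_{\alpha_0}(x-y)\,\rho_Y(y)\, dy = -\alpha_0^{-1}\, g_0(\bfa, x)\, \rho(\bfa, x),
\end{equation*}
which, after dividing by $\rho(\bfa, x)$, is precisely~\eqref{eq:score}. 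The variational characterization~\eqref{eqn:g_obj} is then a standard orthogonal projection argument: for each fixed $\bfa$, the tower property gives
\begin{equation*}
    \E\bigl[|\hat g_k(\bfa, x(\bfa))|^2 - 2 x_k \cdot \hat g_k(\bfa, x(\bfa))\bigr] = \E\bigl[|\hat g_k(\bfa, x(\bfa)) - g_k(\bfa, x(\bfa))|^2\bigr] - \E\bigl[|g_k(\bfa, x(\bfa))|^2\bigr],
\end{equation*}
so the integrand of $L_k$ is pointwise in $\bfa$ minimized, uniquely up to $\rho(\bfa, \cdot)$-null sets in $x$, by $\hat g_k(\bfa,\cdot) = g_k(\bfa,\cdot)$; integrating over $\Delta^K$ preserves the strict minimum.

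The main technical obstacle will be ensuring enough regularity of $\rho$ in $(\bfa,x)$ to justify interchanging $\partial_{\alpha_k}$ with the expectation and to run integration by parts. In the interior of $\Delta^K$, the Gaussian convolution with bandwidth $\alpha_0 > 0$ yields $C^\infty$ regularity in $x$ and $C^\infty$ joint regularity in $(\bfa, x)$ away from $\alpha_0 = 0$; on the remaining boundary faces one must either impose mild smoothness on the $\rho_k$ or interpret the continuity equations distributionally, which is the natural reading throughout.
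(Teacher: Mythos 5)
Your proof is correct, and the arguments for the continuity equations and the variational characterization mirror the paper's: both differentiate the weak formulation $\int\phi\,\rho\,dx=\E[\phi(x(\bfa))]$ in $\alpha_k$, apply the tower property to substitute $g_k$, integrate by parts, and then complete the square in $L_k$ using $\E[x_k\mid x(\bfa)]=g_k(\bfa,x(\bfa))$.

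Where you genuinely diverge is the score identity \eqref{eq:score}. The paper establishes existence of the density and the score identity via characteristic functions: it computes $\E[x_0\,e^{ik\cdot x(\bfa)}]=i\alpha_0 k\,\E[e^{ik\cdot x(\bfa)}]$ using Gaussian integration by parts on $x_0$ and independence, then inverts the Fourier transform to read off $g_0\,\rho=-\alpha_0\nabla\rho$. You instead work in real space, writing $\rho(\bfa,\cdot)$ as the Gaussian convolution $\phi_{\alpha_0}\ast\rho_Y$ with $Y=\sum_{k\ge 1}\alpha_k x_k$, differentiating under the integral sign via $\nabla\phi_{\alpha_0}(u)=-\alpha_0^{-2}u\,\phi_{\alpha_0}(u)$, and recognizing $\alpha_0^{-1}(x-y)$ as the conditional value of $x_0$ on the disintegration slice, which is exactly the Tweedie/denoising-score-matching derivation. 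Both routes need $\alpha_0>0$ and independence of $x_0$ from the remaining marginals, and both correctly deliver \eqref{eq:score}. The Fourier route has the side benefit of giving the existence and positivity of the density and the score identity in one computation; your convolution route makes the mollification mechanism and smoothness more transparent and is arguably more elementary, at the cost of needing the explicit disintegration/change-of-variables step to identify $g_0$. Your closing remark on regularity at the $\alpha_0=0$ boundary is a fair caveat that the paper glosses over: the characteristic-function smoothness argument there likewise only operates for $\alpha_0>0$, so on the opposite face the continuity equations are best read distributionally or under additional assumptions on the coupling, as you note.
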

The proof of \cref{thm:bary:si:CE} is given in \cref{app:proofs}: it relies on the definition of the density~$\rho(\alpha,x)$ that says that, for every suitable test function $\phi(x)$, we have 
\begin{equation}
    \int_{\mathbb R^d}\phi(x) \rho(\bfa,x) dx = \int_{\mathbb R^{(K+1)d}}\phi(x(\alpha)) \rho_0(x_0) \rho(x_1,\ldots,x_K) dx_0 \cdots dx_K.
\end{equation}
where $x(\alpha)=\sum_{k=0}^K \alpha_k x_k$. Taking the derivative of both sides with respect to $\alpha_k$ using the chain rule as well as $\partial_{\alpha_k} x(\alpha)=x_k$, gives us
\begin{equation}
    \int_{\mathbb R^d}\phi(x) \partial_{\alpha_k} \rho(\bfa,x) dx = \int_{\mathbb R^{(K+1)d}}x_k \cdot \nabla \phi(x(\alpha)) \rho_0(x_0) \rho(x_1,\ldots,x_K) dx_0 \cdots dx_K.
\end{equation}
Integrating the right hand side by parts and using the definition of the conditional expectation implies~\eqref{eq:bary:si:CE:0}.

As we will see, the loss functions in~\eqref{eqn:g_obj} are amenable to empirical estimation over a dataset of samples, which enables efficient learning of the $g_k$.
The resulting approximations can be combined according to~\eqref{eq:b:def} to construct a multimarginal generative model.
In practice, we have the option to parameterize a single, weight-shared $g(\bfa, x)=(g_0(\alpha, x),\ldots g_K(\alpha,x))$ as a function from $\Delta^{(K-1)} \times \mathbb R^d \rightarrow \mathbb R^{K \times d}$, or to parameterize the $g_k: \Delta^{(K-1)} \times \mathbb R^d \rightarrow \mathbb R^d$ individually.
In our numerical experiments, we proceed with the former for efficiency.

\cref{thm:bary:si:CE} provides  relations between derivatives of the density $\rho(\alpha, x)$ with respect to~$\alpha$ and  with respect to~$x$ that involve the conditional expectations~$g_k(\alpha,x)$.
By specifying a curve $\alpha(t)$ that traverses the simplex, we can use this result to design generative models that transport directly from any one marginal to another, or that are influenced by multiple marginals throughout the generation process, as we now show.

\begin{restatable}[Transport equations]{corollary}{curve}
\label{cor:curve}
Let $\{e_k\}$ represent the standard basis vectors of $\mathbb R^{K+1}$, and let $\bfa: [0, 1] \rightarrow \Delta^{K}$ denote a differentiable curve satisfying $\bfa(0) = e_i$ and $\bfa(1)= e_j$ for any $i,j=0,\ldots, K$.
Then the barycentric stochastic interpolant $x(\bfa(t))$ has probability density $\bar\rho(t,x) = \rho(\bfa(t),x)$ that satisfies the transport equation
\begin{equation}
\label{eq:bary:si:CE}
\begin{aligned}
    \partial_t \bar\rho(t, x) + \nabla \cdot \big( b(t,x) \bar\rho(t, x)\big) = 0, \qquad \bar\rho(t=0,x)= \rho_i(x), \qquad \bar\rho(t=1,x)= \rho_j(x),
\end{aligned}
\end{equation} 
where we have defined the velocity field
\begin{equation}
    \label{eq:b:def}
    b(t,x) = \sum_{k=0}^K \dot \alpha_k(t) g_k(\bfa(t),x).
\end{equation}
In addition, the probability flow associated with \eqref{eq:bary:si:CE} given by
\begin{equation}
    \label{eq:pfode}
    \dot X_t = b(t,X_t)
\end{equation}
satisfies $X_{t=1}\sim \rho_j$ for any $X_{t=0} \sim \rho_i$, and vice-versa. 
\end{restatable}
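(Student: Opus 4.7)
The plan is to derive the transport equation by applying the chain rule to $\bar\rho(t,x) = \rho(\alpha(t),x)$ and then invoking the $K+1$ continuity equations from \cref{thm:bary:si:CE} as a direct substitution.

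First, I would differentiate $\bar\rho(t,x)$ in~$t$ using the chain rule, obtaining
\begin{equation*}
\partial_t \bar\rho(t,x) = \sum_{k=0}^{K} \dot\alpha_k(t)\, \partial_{\alpha_k} \rho(\alpha(t),x).
\end{equation*}
Substituting the continuity equation~\eqref{eq:bary:si:CE:0} for each $\partial_{\alpha_k}\rho$ yields
\begin{equation*}
\partial_t \bar\rho(t,x) = -\sum_{k=0}^{K} \dot\alpha_k(t)\, \nabla_x\!\cdot\!\bigl(g_k(\alpha(t),x)\,\rho(\alpha(t),x)\bigr) = -\nabla_x\!\cdot\!\bigl(b(t,x)\,\bar\rho(t,x)\bigr),
\end{equation*}
after pulling the sum inside the divergence and using the definition~\eqref{eq:b:def} of $b$. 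This gives the PDE in~\eqref{eq:bary:si:CE}. Here I should note that, although $\alpha(t)$ is constrained to the simplex (so $\sum_k \dot\alpha_k(t)=0$), the conclusion is unaffected: the derivation uses only the individual continuity equations from \cref{thm:bary:si:CE}, each of which holds on $\Delta^K$, and the resulting identity is a genuine linear combination.

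Next, I would verify the boundary conditions by evaluating the barycentric interpolant~\eqref{eq:si} at the endpoints. Since $\alpha(0)=e_i$ gives $x(\alpha(0)) = x_i$ (and similarly $x(\alpha(1)) = x_j$), and the marginal of $x_k$ under $\rho_0(x_0)\rho(x_1,\ldots,x_K)$ is exactly $\rho_k$ by~\eqref{eq:rhok:cond} (with $\rho_0 = N(0,\Id_d)$), we obtain $\bar\rho(0,\cdot) = \rho_i$ and $\bar\rho(1,\cdot) = \rho_j$.

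Finally, for the probability-flow statement I would invoke the standard correspondence between continuity equations and their characteristic ODEs: if $X_t$ solves $\dot X_t = b(t,X_t)$ with $X_0 \sim \bar\rho(0,\cdot)$, then the law of $X_t$ solves~\eqref{eq:bary:si:CE} with the same initial condition, hence coincides with $\bar\rho(t,\cdot)$ by uniqueness of the continuity equation (assuming enough regularity of $b$, which is inherited from regularity of the $g_k$ and $\alpha$). Setting $t=1$ gives $X_1 \sim \rho_j$; the reverse direction follows by reversing time on the ODE and applying the same argument to the curve $\tilde\alpha(t)=\alpha(1-t)$. The only potentially delicate point is the regularity needed to justify uniqueness of the flow, but this is a standard caveat shared with the two-marginal setting of~\citep{albergo2023building,albergo2023stochastic} and does not introduce new obstacles here; the substantive content of the corollary is entirely the chain-rule manipulation above.
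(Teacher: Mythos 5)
Your proof is correct and follows essentially the same route as the paper's: chain rule in $t$ applied to $\bar\rho(t,x)=\rho(\alpha(t),x)$, substitution of the $K+1$ continuity equations from \cref{thm:bary:si:CE}, identification of $b$ via \eqref{eq:b:def}, boundary conditions from the endpoint values of $\alpha$, and the characteristic-ODE correspondence for the probability flow. Your added remarks on the simplex constraint and on regularity for uniqueness of the flow are harmless elaborations (the key point being that $\rho(\alpha,x)$ and the $\partial_{\alpha_k}\rho$ are defined ambiently, not merely on the constraint manifold, so the chain rule applies directly); they do not change the argument.
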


Corollary~\ref{cor:curve} is proved in \cref{app:proofs}, where we also show how to use the information about the score in~\eqref{eq:score} to derive generative models based on stochastic dynamics.
The transport equation~\eqref{eq:bary:si:CE} is a simple consequence of the identity $\partial_t \bar\rho(t, x) = \sum_{k=0}^K \dot \alpha_k(t) \partial_{\alpha_k}\rho(\bfa(t),x) $, in tandem with the equations in~\eqref{eq:bary:si:CE:0}.
It states that the barycentric interpolant framework leads to a generative model defined for any path on the simplex, which can be used to transport between any pair of marginal densities.

Note that the two-marginal transports along fixed edges in~\eqref{eq:bary:si:CE} reduce to a set of $K$ independent processes, because each $g_k$ takes values independent from the rest of the simplex when restricted to an edge.
In fact, for any edge between $\rho_i$ and $\rho_j$ with $i,j \neq k$, the marginal vector field $g_k(\alpha,x) = \mathbb E[x_k | x(\alpha) = x] = \E[x_k]$, because the conditioning event provides no additional information about $x_k$.
In practice, however, we expect imperfect learning and the implicit bias of neural networks to lead to models with two-marginal transports that are influenced by all data used during training.
We summarize a few possibilities with the multimarginal generative modeling setup in~\cref{table:example}.

\subsection{Optimizing the paths to lower the transport cost}
\label{sec:OT}
Importantly, the results above  highlight that the learning problem for the velocity $b$ is decoupled from the choice of path $\alpha(t)$ on the simplex.
We can consider the extent to which this flexibility can lower the cost of transporting a given $\rho_i$ to another $\rho_j$ when this transport is accomplished by solving \eqref{eq:pfode} subject to \eqref{eq:bary:si:CE}. 
In particular, we are free to parameterize the $\alpha_k$ in the expression for the velocity $b(t,x)$ given in \eqref{eq:b:def} so long as the simplex constraints on $\alpha$ are satisfied. We use this to state the following corollary. 

\begin{restatable}{corollary}{transp}
    \label{cor:transport:path} 
The solution to 
    \begin{equation}
        \label{eq:ot:cost}
        \mathcal C(\hat \alpha ) = \min_{\hat \alpha} 
\int_0^1 \mathbb E \Big [ \Big|\sum_{k=0}^K \dot{\hat\alpha}_k(t) g_k(\hat \alpha(t), x(\hat \alpha(t))) \Big|^2 \Big] dt
    \end{equation}
    gives the transport with least path length in Wasserstein-2 metric over the class of velocities $\hat b(t,x) = \sum_{k=0}^K\dot{\hat\alpha}_k(t) g_k(\hat \alpha(t), x)$. Here  $g_k$ is given by \eqref{eq:gk}, $x(\alpha) = \sum_{k=0}^K \alpha_k x_k$, the expectation is taken over $(x_0,x_1\ldots x_K) \sim \rho_0\cdot\rho$, and the minimization is over all paths  $\hat \alpha \in C^1([0,1])$ such that $\hat \alpha(t) \in \Delta^K$ for all $t\in[0,1]$.
\end{restatable}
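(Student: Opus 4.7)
The plan is to identify the given objective $\mathcal C(\hat\alpha)$ with the Benamou--Brenier action functional evaluated on the velocity $\hat b(t,x)$ and density $\bar\rho(t,x)$ induced by $\hat\alpha$, and then invoke the Benamou--Brenier variational characterization of the Wasserstein-2 path length. First I would note that by \cref{cor:curve}, any admissible path $\hat\alpha \in C^1([0,1],\Delta^K)$ with $\hat\alpha(0) = e_i$, $\hat\alpha(1) = e_j$ yields a transport plan between $\rho_i$ and $\rho_j$: namely, the probability flow \eqref{eq:pfode} with velocity $\hat b(t,x) = \sum_k \dot{\hat\alpha}_k(t)\, g_k(\hat\alpha(t),x)$ pushes $\rho_i$ to $\rho_j$, while the density $\bar\rho(t,x) = \rho(\hat\alpha(t),x)$ solves the continuity equation $\partial_t \bar\rho + \nabla \cdot(\hat b \bar\rho) = 0$.

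Next I would rewrite the objective as a spatial integral against $\bar\rho$. By definition of the barycentric interpolant and its density, for any integrable test function $\psi$,
\begin{equation}
\mathbb E\big[\psi(x(\hat\alpha(t)))\big] = \int_{\R^d} \psi(x)\, \bar\rho(t,x)\, dx,
\end{equation}
where the expectation on the left is taken over $(x_0,\ldots,x_K)\sim \rho_0\cdot\rho$. Applying this to $\psi(x) = |\sum_k \dot{\hat\alpha}_k(t) g_k(\hat\alpha(t),x)|^2 = |\hat b(t,x)|^2$ (which depends on $x$ only through $x(\hat\alpha(t))$ in the expectation), we obtain
\begin{equation}
\mathcal C(\hat\alpha) = \int_0^1 \int_{\R^d} |\hat b(t,x)|^2\, \bar\rho(t,x)\, dx\, dt.
\end{equation}
This is precisely the Benamou--Brenier action associated with the pair $(\bar\rho, \hat b)$, which by the Benamou--Brenier formula coincides with the squared Wasserstein-2 path length of the curve of densities $t \mapsto \bar\rho(t,\cdot)$ induced by $\hat b$.

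Finally, I would conclude by noting that minimizing $\mathcal C(\hat\alpha)$ over $\hat\alpha \in C^1([0,1];\Delta^K)$ (with the prescribed endpoints) therefore selects, among the parametric family of densities and velocities $\{(\bar\rho,\hat b) : \hat\alpha\}$, the pair realizing the least Benamou--Brenier action, i.e., the shortest $W_2$ path among this restricted class. I would emphasize explicitly that the minimization is not over all admissible transport plans between $\rho_i$ and $\rho_j$, but only over the subfamily induced by curves on the simplex with the learned $g_k$ fixed; hence the claim is that $\hat\alpha^\ast$ yields the shortest such path.

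The only subtle point I would expect to have to address carefully is the swap between the Lagrangian expectation over $(x_0,\ldots,x_K)$ and the Eulerian integral against $\bar\rho$. This rests on the fact that $\hat b(t,x)$ at a fixed $x = x(\hat\alpha(t))$ is independent of which realization $(x_0,\ldots,x_K)$ produced that $x$, since $g_k$ is a deterministic function of $(\hat\alpha(t),x)$. After this observation the reduction to Benamou--Brenier is immediate, so the main obstacle is more conceptual (clarifying the parametric restriction) than technical.
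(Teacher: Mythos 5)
Your proof is correct and follows essentially the same route as the paper's: both arguments rewrite the Lagrangian expectation $\int_0^1 \E[|\hat b(t, x(\hat\alpha(t)))|^2]\,dt$ as the Eulerian Benamou--Brenier action $\int_0^1\int_{\R^d} |\hat b(t,x)|^2\,\bar\rho(t,x)\,dx\,dt$ using the definition of the interpolant density, and then identify this with the Wasserstein-2 kinetic cost over the restricted class of velocities. Your extra remarks (that $g_k$ is a deterministic function of $(\alpha,x)$, so the integrand depends on the sample only through $x(\hat\alpha(t))$, and that the minimization is over the parametric subfamily rather than all transports) make explicit exactly the points the paper's terse proof leaves implicit, but they are the same proof.
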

Note the objective in~\eqref{eq:ot:cost} can be estimated efficiently without having to simulate the probability flow given by \eqref{eq:pfode}. Note also that this gives a way to reduce the transport directly without having to solve the max-min problem presented \citep{albergo2023building}. We stress however that this class of velocities is not sufficient to achieve optimal transport, as such a guarantee requires the use of nonlinear interpolants (cf. Appendix D of \cite{albergo2023building}). 
In addition to optimization over the path, the transport equation~\eqref{eq:bary:si:CE} enables generative procedures that are unique to the multimarginal setting, such as transporting any given density to the barycenter of the $K+1$ densities, or transport through the interior of the simplex.

\begin{table}[t]
    \centering
    \label{table:example}
    \begin{tabular}{cccc}
        \toprule
        Example & Example Application & & Simplex representation \\

        \midrule
        2-marginal   & standard generative modeling & $\rho_1 \rightarrow \rho_2$   &
            \begin{tikzpicture}[scale=1]
                  \foreach \i in {0,1} {
                  \coordinate (P\i) at (\i, 0);
                  \fill[brown] (P\i) circle [radius=0.1];
                  }

                    \draw[thick,black] (P0) -- (P1);
            \end{tikzpicture} \\
        \midrule
        3-marginal & two-marginal with smoothing &  $\rho_1 \rightarrow \rho_2$   &
        
                            \begin{tikzpicture}[scale=1,
                                mid-arrow/.style={decoration={
                                markings,
                                mark=at position 0.5 with {\arrow{>}}
                                }, postaction={decorate}}
                                ]

                            \coordinate (P0) at (0,0);
                            \coordinate (P1) at (1,0);
                            \coordinate (P2) at (0.5, {sqrt(3)/2});

                            \foreach \i in {0,1,2} {
                              \fill[brown] (P\i) circle [radius=0.1];
                            }

                            \draw[thick,black] (P0) -- (P1) -- (P2) -- cycle;

                            \draw[thick, blue, mid-arrow] (P0) .. controls (0.5,0.5) and (0.75,0.25) .. (P1);

                            \end{tikzpicture} \\
        \midrule
        K-marginal & all-to-all image translation/style transfer& $\rho_i \rightarrow \rho_j$ &    \begin{tikzpicture}[scale=0.7]
                    \foreach \i in {0,1,...,9} {
                    \coordinate (P\i) at ({cos(36*\i)},{sin(36*\i)});
                    \fill[brown] (P\i) circle [radius=0.1];
                    }

                    \draw[thin,black] (P0) -- (P1) -- (P2) -- (P3) -- (P4) -- (P5) -- (P6) -- (P7) -- (P8) -- (P9) -- cycle;

                    \foreach \i in {0,1,...,9} {
                        \foreach \j in {\i,...,9} {
                            \draw[thin,black] (P\i) -- (P\j);
                        }
                    } 
                    \end{tikzpicture} \\
        \bottomrule
    \end{tabular}
    \caption{Characterizing various generative modeling aims with respect to how their transport appears on the simplex. We highlight a sampling of various other tasks that can emerge from generic multimarginal generative modeling. }
\end{table}

\subsection{Deterministic couplings and map distillation}
\label{sec:distill}
We now describe an illustrative coupling $\rho_0(x_0)\rho(x_1,\dots, x_K)$ for the multimarginal setting in which the probability flow between marginals is given in one step. 
Let $T_k: \R^d \to \R^d$ for $k=1,\ldots, K$ be invertible maps such that $x_k = T(x_0) \sim \rho_k$ if $x_0\sim \rho_0$ (i.e., each $\rho_k$ is the pushforward of $\rho_0$ by $T_k$).
Assume that the coupling $\rho$ is of Monge type, i.e., 
\begin{equation}
    \label{eq:rho:map}
    \rho(x_1,\ldots, x_K) = \prod_{k=1}^K \delta(x_k-T_k(x_0))
\end{equation}
Clearly, this distribution satisfies~\eqref{eq:rhok:cond}. 
For simplicity of notation let us denote by $T_0(x) = x$ the identity map. 
Let us also assume that, for all $\alpha \in \Delta^K$, the map $\sum_{k=0}^k \alpha_k T_k(x)$ is invertible, with inverse $R(\alpha,x)$:
\begin{equation}
    \label{eq:invert}
    \forall \alpha \in \Delta^K, x\in \R^d\quad : \quad R\Big(\alpha,\sum_{k=0}^K \alpha_k T_k(x)\Big) = \sum_{k=0}^k \alpha_k T_k(R(\alpha,x)) = x
\end{equation}
In this case, the factors $g_k(\alpha,x)$ evaluated at $\alpha=e_0$ simply recover the maps $T_k$.
Evaluated at $e_i$, they transport samples from $\rho_i$ to samples from $\rho_k$, as shown in the following results:

\begin{theorem}
\label{thm:coupling}
    Assume that $\rho(x_1,\ldots,x_K)$ is given by~\eqref{eq:rho:map} and that  \eqref{eq:invert} holds. Then, under the same conditions as in~\cref{cor:curve}, the factors $g_k(\alpha,x)$ defined in~\eqref{eq:gk} are given by
\begin{equation}
\label{eq:gk:map}
    \begin{aligned}
    g_k(\bfa, x)  = T_k(R(\alpha,x))
    \end{aligned}
\end{equation}
In particular,  if $\alpha = e_i$,  we have $g_k(e_i,x) = T_k(T_i^{-1}(x))$ so that if $x_i\sim \rho_i$ then $g_k(e_i,x_i)\sim \rho_k$.
\end{theorem}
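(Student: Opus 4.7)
The plan is to exploit the fact that the Monge coupling~\eqref{eq:rho:map} collapses every $x_k$ to a deterministic function of the single draw $x_0 \sim \rho_0$; extending by $T_0 := \mathrm{Id}$, we have $x_k = T_k(x_0)$ for all $k = 0, \ldots, K$. The interpolant therefore reduces to
\begin{equation}
x(\alpha) = \sum_{k=0}^K \alpha_k T_k(x_0) =: F_\alpha(x_0),
\end{equation}
and the invertibility hypothesis~\eqref{eq:invert} says exactly that $F_\alpha$ has a global inverse $R(\alpha, \cdot)$ for each fixed $\alpha \in \Delta^K$. Hence the event $\{x(\alpha) = x\}$ uniquely pins down $x_0 = R(\alpha, x)$, which in turn determines every $x_k = T_k(R(\alpha, x))$.

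Given this, one direct route is to compute $g_k(\alpha, x) = \E[x_k \mid x(\alpha) = x]$ from the Dirac conditional distribution of $x_0$; but to avoid formal manipulations of the degenerate coupling, I would instead invoke the variational characterization~\eqref{eqn:g_obj} from \cref{thm:bary:si:CE}. Substituting the candidate $\hat g_k(\alpha, x) := T_k(R(\alpha, x))$ yields
\begin{equation}
\hat g_k(\alpha, x(\alpha)) = T_k(R(\alpha, F_\alpha(x_0))) = T_k(x_0) = x_k \quad \text{almost surely},
\end{equation}
so $\E[|\hat g_k(\alpha, x(\alpha)) - x_k|^2] = 0$. Since $L_k(\hat g_k)$ agrees with $\int \E[|\hat g_k(\alpha,x(\alpha)) - x_k|^2] d\alpha$ up to an additive constant independent of $\hat g_k$, and since \cref{thm:bary:si:CE} guarantees a unique minimizer, we conclude $g_k = \hat g_k$, which is~\eqref{eq:gk:map}.

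The specialization to $\alpha = e_i$ is then immediate: $F_{e_i} = T_i$, so $R(e_i, \cdot) = T_i^{-1}$ and $g_k(e_i, x) = T_k(T_i^{-1}(x))$. The pushforward assertion follows from two applications of the definition of pushforward: if $x_i \sim \rho_i$, then $T_i^{-1}(x_i) \sim \rho_0$ because $\rho_i = (T_i)_\# \rho_0$, and hence $T_k(T_i^{-1}(x_i)) \sim \rho_k$ because $\rho_k = (T_k)_\# \rho_0$. The main subtlety throughout will be handling the degenerate coupling~\eqref{eq:rho:map} measure-theoretically; taking the variational route above sidesteps this entirely and reduces the argument to the invertibility identity~\eqref{eq:invert} together with elementary function composition.
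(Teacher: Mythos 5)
Your proof is correct and reaches the same conclusion, but it takes a slightly different route from the paper's. The paper computes the conditional expectation directly: substituting the Monge coupling into $g_k(\alpha,x) = \E[x_k \mid x(\alpha)=x]$ gives $\E_0[T_k(x_0)\mid \sum_{k'}\alpha_{k'}T_{k'}(x_0)=x]$, then the invertibility hypothesis~\eqref{eq:invert} lets one rewrite the conditioning event as $\{x_0 = R(\alpha,x)\}$, and the conditional expectation of a deterministic function of $x_0$ under exact conditioning on $x_0$ is simply $T_k(R(\alpha,x))$. You instead appeal to the variational characterization in \cref{thm:bary:si:CE}: you check that the candidate $\hat g_k(\alpha,x) = T_k(R(\alpha,x))$ satisfies $\hat g_k(\alpha,x(\alpha)) = x_k$ almost surely, so the completed-square form of $L_k$ attains its global minimum (zero squared residual), and uniqueness of the minimizer then forces $g_k = \hat g_k$. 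Both arguments are essentially the same fact viewed two ways (the conditional expectation is the $L^2$-projection onto $\sigma(x(\alpha))$, and your residual computation shows $x_k$ is already $\sigma(x(\alpha))$-measurable under the Monge coupling), so neither is more elementary. The one caveat worth flagging in your version is that you invoke the variational characterization of \cref{thm:bary:si:CE} for the singular Monge coupling~\eqref{eq:rho:map}, whereas \cref{def:bary:si} phrases things in terms of a joint \emph{density} $\rho(x_1,\ldots,x_K)$; the characterization does in fact extend (the $L^2$-projection argument never needs absolute continuity of the joint, only that $x(\alpha)$ has a distribution), but a careful write-up should say so explicitly. The paper's direct computation sidesteps this by never leaving the definition of $g_k$ as a conditional expectation. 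Your handling of the $\alpha = e_i$ specialization and the pushforward statement matches the paper exactly.
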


\begin{proof}
If $\rho(x_1,\ldots,x_K)$ is given by~\eqref{eq:rho:map}, we have
\begin{equation}
\label{eq:gk:map:p}
    \begin{aligned}
    g_k(\bfa, x) = \E_0 \Big[ T_k(x_0) \Big| \sum_{k=0}^K \alpha_k T_k(x_0) = x\Big]= \E_0 \Big[ T_k(x_0) \Big| x_0 = R(\alpha,x)\Big] = T_k(R(\alpha,x)).
    \end{aligned}
\end{equation}
where $\E_0$ denotes expectation over $x_0\sim \rho_0$. The relation $g_k(e_i,x) = T_k(T_i^{-1}(x))$ follows from~\eqref{eq:gk:map} since $R(e_i,x)=T^{-1}_i(x)$ and the final statement from~\eqref{eq:rho:map}.
\end{proof}

In addition, the solution to the probability flow ODE~\eqref{eq:pfode} with initial data $X_{t=0}=T_i(x_0)$ is given by 
\begin{equation}
    \label{eq:pfflow:map}
    X_t = \sum_{k=0}^K \alpha_k(t) T_k(x_0)
\end{equation}
so that $X_{t=1}=T_j(x_0)$ since $\alpha(1)=e_j$ by assumption. To check \eqref{eq:pfflow:map}, notice that $ b(t,x) = \sum_{k=0}^K \dot \alpha_k(t) T_k(R(\bfa(t),x))$. As a result, using~\eqref{eq:invert},
\begin{equation}
    \label{eq:b:map:sol}
        b(t,X_t) = \sum_{k=0}^K \dot \alpha_k(t) T_k\Big(R\Big(\bfa(t), \sum_{k'=0}^K \alpha_{k'}(t) T_{k'}(x_0)\Big)\Big) \\
        = \sum_{k=0}^K \dot \alpha_k(t) T_k(x_0) = \dot X_t
\end{equation} 
so that \eqref{eq:pfflow:map} satisfies \eqref{eq:pfode}.

\section{Numerical experiments}
\label{sec:exp}
In this section, we numerically explore the characteristics, behavior, and applications of the multimarginal generative model introduced above. We start with the numerical demonstration of the path minimization in the restricted class defined in \cref{cor:transport:path}. Following that, we explore various pathways of transport on the simplex to illustrate various characteristics and applications of the method. We explore what happens empirically with the respect to transport on the edges of the simplex and from its barycenter. We also consider empirically if probability flows following \eqref{eq:pfode} with different paths specified by $\alpha(t)$ but the same endpoint $\alpha(1)$ on  give meaningfully similar but varying samples.

\subsection{Optimizing the simplex path $\alpha(t)$} 
\label{sec:exp:path} 
\begin{figure}[t]
    \centering
    \includegraphics[width=\linewidth]{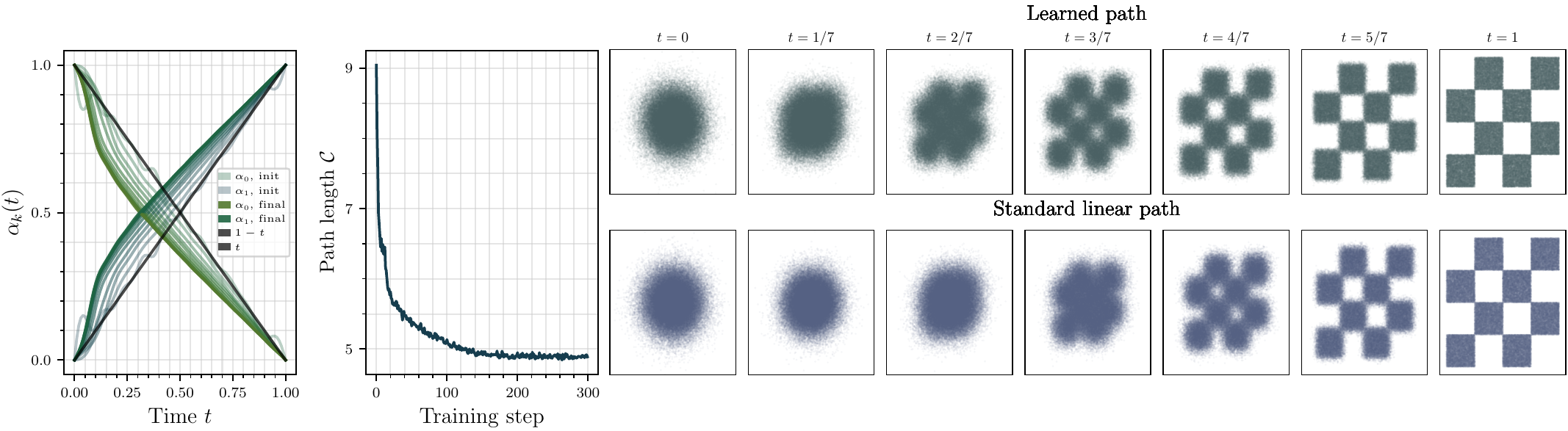}
    \caption{Direct optimization of $\alpha(t)$ over a parametric class to reduce transport cost in the 2-marginal learning problem of a Gaussian to the checkerboard density. \textit{Left}: The initial and final $\hat \alpha_0, \hat \alpha_1$ learned over 300 optimizations steps on \eqref{eq:ot:cost}. \textit{Center}: The reduction in the path length over training. \textit{Right}: Time slices of the probability density $\bar\rho(t,x)$ corresponding to learned interpolant with learned $\hat \alpha$ as compared to the linear interpolant $\alpha = [1-t, t]$.}
    \label{fig:opt:path}
\end{figure}
As shown in~\cref{sec:theo}, the multimarginal formulation reveals that transport between distributions can be defined in a path-independent manner.
Moreover,~\cref{cor:transport:path} highlights that the path can be determined \textit{a-posteriori} via an optimization procedure.
Here we study to what extent this optimization can lower the transport cost, and in addition, we consider what effect it has on the resulting path at the qualitative level of the samples.
We note that while we focus here on the stochastic interpolant framework considered in~\eqref{eq:si}, score-based diffusion~\citep{song_scorebased_2021} fits within the original interpolant formalism as presented in~\eqref{eqn:stoch_interp_2m} .
This means that the framework that we consider can also be used to optimize over the diffusion coefficient $g(t)$ for diffusion models, providing an algorithmic procedure for the experiments proposed by~\cite{karras_elucidating_2022}.

A numerical realization of the effect of path optimization is provided in~\cref{fig:opt:path}. 
We train an interpolant on the two-marginal problem of mapping a standard Gaussian to the two-dimensional checkerboard density seen in the right half of the figure. 
We parameterize the $\alpha_0$ and $\alpha_1$ in this case with a Fourier expansion with $m=20$ components normalized to the simplex; the complete formula is given in Appendix \ref{app:exp:ot}. 
%
%
A visualization of the improvements from the learned $\alpha$ for fixed number of function evaluations is also given in the appendix.
We note that this procedure yields $\alpha_0$ and $\alpha_1$ that are qualitatively very similar to those observed by~\cite{shaul2023kinetic}, who study how to minimize a kinetic cost of Gaussian paths to the checkerboard.

\subsection{All-to-all image translation and style transfer}
A natural application of the multimarginal framework is image-to-image translation. Given $K$ image distributions, every image dataset is mapped to every other in a way that can be exploited, for example, for style transfer. 
If the distributions have some distinct characteristics, the aim is to find a map that connects samples $x_i$ and $x_j$ from $\rho_i$ and $\rho_j$ in a way that visually transforms aspects of one into another.
We explore this in two cases. The first is a class to class example on the MNIST dataset, where every digit class is mapped to every other digit class. 
The second is a case using multiple common datasets as marginals: the AFHQ-2 animal faces dataset \citep{choi2020stargan}, the Oxford flowers dataset \citep{Nilsback06}, and the CelebA dataset \citep{zhang2019show}. 
All three are set to resolution $64\times64$. 
For each case, we provide pictorial representations that indicate which marginals map where using either simplex diagrams or Petrie polygons representing the edges of the higher-order simplices.
For all image experiments, we use the U-Net architecture made popular in \citep{ho2020}. 
\begin{figure}[h!]
    \centering
    \includegraphics[width=1.0\linewidth]{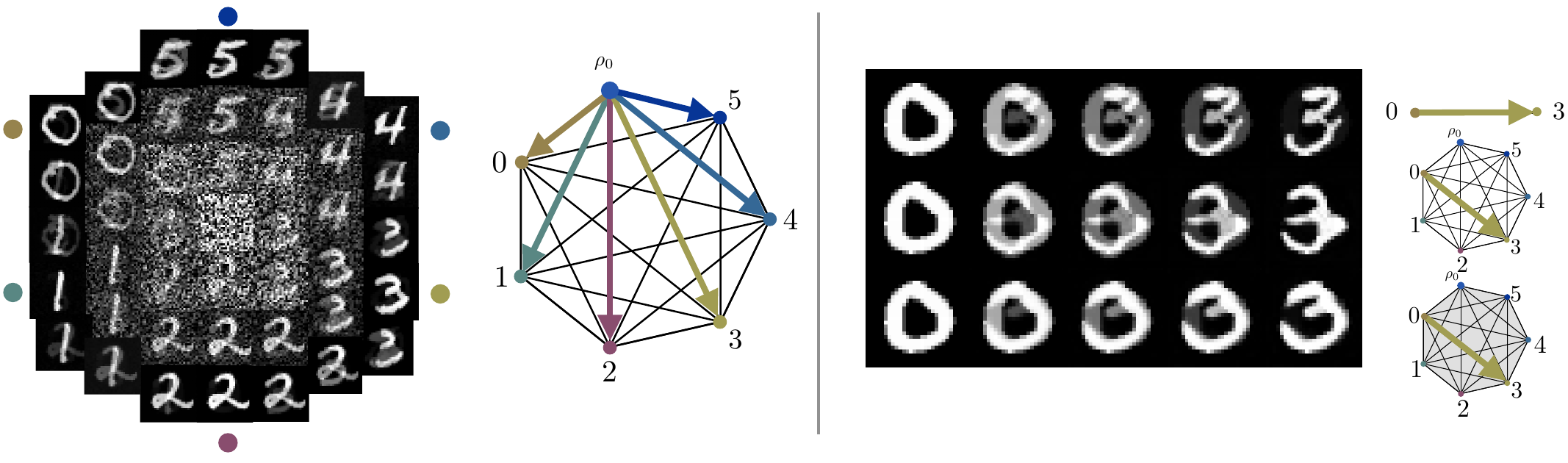}
    \caption{Left: Generated MNIST digits from the same Gaussian sample $x_0 \sim \rho_0$, with $K=7$ marginals ($\rho_0$ and 6 digit classes).
    $x_0$ is visualized in the center of the image collection at time $t=0$, and the perimeter corresponds to transport to the edge of the simplex at time $t=1$ with vertices color-coded.
     A Petrie polygon representing the 6-simplex, with arrows denoting transport from the Gaussians along edges to the color-coded marginals, clarifies the marginal endpoints.
     Right: Demonstrating the impact of learning with over the larger simplex. Top row: learning just on the simplex edge from $0$ to $3$. Middle: Learning on all the simplex edges from $0$ through $5$. Bottom: Learning on the entire simplex constructed from $0$ through $5$ and not just the edges.}
    \label{fig:mnist:gauss}
\end{figure}
\paragraph{Qualitative ramifications of multimarginal transport}
We use mappings between MNIST digits as a simple testbed to begin a study of multimarginal mappings. 
For visual simplicity, we consider the case of a set of marginals containing the digits $0$ through $5$ and a Gaussian. The inclusion of the Gaussian marginal facilitates resampling to generate from any given marginal, in addition to the ability to map from any $\rho_i$ to any other $\rho_j$.

In the left of \cref{fig:mnist:gauss}, we show the generation space across the simplex for samples pushed forward from the same Gaussian sample towards all other vertices of the simplex. 
Directions from the center to the colored circles indicate transport towards one of the vertices corresponding to the different digits. 
We note that this mapping is smooth amongst all the generated endpoint digits, allowing for a natural translation toward every marginal. 

\begin{figure}[h]
    \centering
    \includegraphics[width=1.0\linewidth]{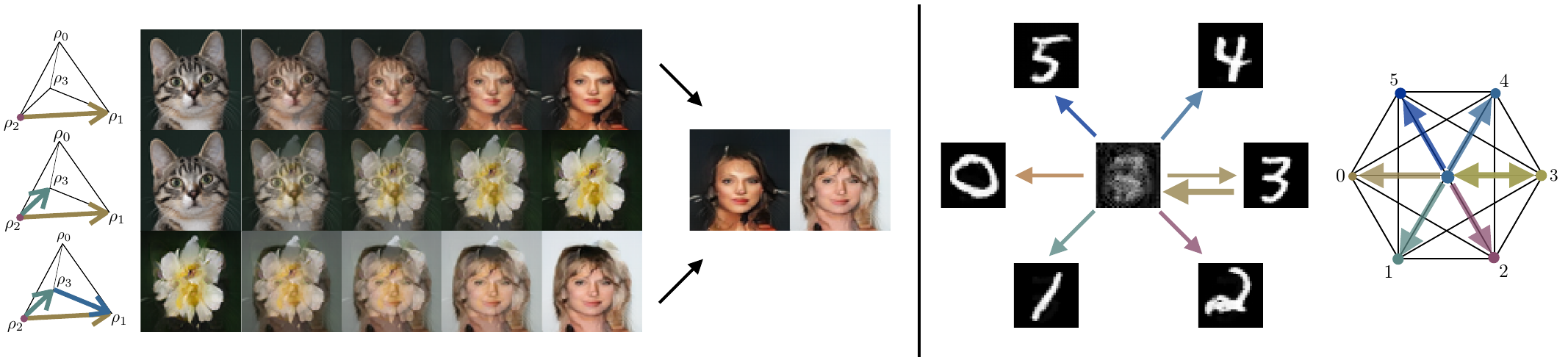}
    \caption{Left: An illustration of how different transport paths on the simplex can reach final samples that have similar content. Top row: A cat sample is transformed into a celebrity. Middle row: The same cat is pushed to the flower marginal. Bottom row: The new marginal flower sample is then pushed to a celebrity that maintains meaningful semantic structure from the celebrity generated from along the other path on the simplex.}
    \label{fig:loop:bary}
\end{figure}

On the right of this figure, we illustrate how training on the simplex influences the learned generative model.
The top row corresponds to learning just in the two-marginal case.
The middle row corresponds to learning only on the \textit{edges} of the simplex, and is equivalent to learning multiple two-marginal models in one shared representation.  
The third corresponds to learning on the entire simplex. 
We observe here (and in additional examples provided in~\cref{app:exp:add}) that learning on the whole simplex empirically results in image translation that better preserves the characteristics of the original image. 
This can best be seen in the bottom row, where the three naturally emerges from the shape of the zero. 

In addition to translating \textit{along} the marginals, one can try to forge a correspondence between the marginals by pushing samples through the barycenter of the simplex defined by $\alpha(t) = (\frac{1}{K+1}, \dots, \frac{1}{K+1})$. 
This is a unique feature of the multimarginal generative model, and we demonstrate its use in the right of~\cref{fig:loop:bary} to construct other digits with a correspondence to a given sample $3$.

We next turn to the higher-dimensional image generation tasks depicted in~\cref{fig:marg:sty}, where we train a 4-marginal model ($K=3$) to generate and translate among the celebrity, flowers, and animal face datasets mentioned above. 
In the left of this figure we present results generated marginally from the Gaussian vertex for a model trained on the entire simplex. 
Depictions of the transport along edges of the simplex are shown to the right. 
We show in the right half of the figure that image-to-image translation between the marginals has the capability to pick up coherent form, texture, and color from a sample from a sample from one marginal and bring it to another (e.g. the wolf maps to a flower with features of the wolf). Additional examples are available in \cref{app:exp:add}. 

\begin{figure}[t]
    \centering
    \includegraphics[width=\linewidth]{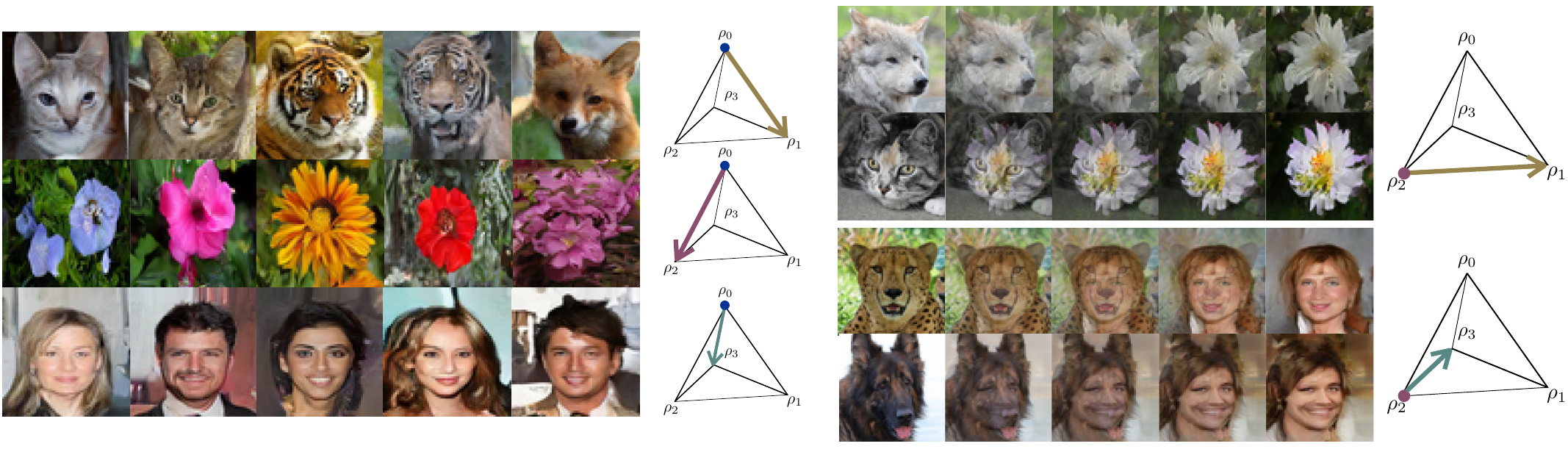}
    \caption{Left: Marginally sampling a $K=4$ multimarginal model comprised of the AFHQ, flowers, and CelebA datasets at resolution $64\times64$. 
    Shown to the right of the images is the corresponding path taken on the simplex, with $\alpha(0) = e_0$ starting at the Gaussian $\rho_0$ and ending at one of $\rho_1, \rho_2$ or $\rho_3$. 
    Right: Demonstration of style transfer that emerges naturally when learning a multimarginal interpolant. 
    With a single shared interpolant, we flow from the AFHQ vertex $\rho_2$ to the flowers vertex $\rho_1$ or to the CelebA vertex $\rho_3$. 
    The learned flow connects images with stylistic similarities.}
    \label{fig:marg:sty}
\end{figure}

As a final exploration, we consider the effect of taking different paths on the simplex to the same vertex. 
As mentioned in the related works and in \cref{thm:coupling}, a coupling of the form \eqref{eq:rho:map} would lead to sample generation independent of the interior of the path $\alpha(t)$ and dependent only on its endpoints. 
Here, we do not enforce such a constraint, and instead examine the path-dependence of an unconstrained multimarginal model.
In the left of \cref{fig:loop:bary}, we take a sample of a cat from the animals dataset and push it to a generated celebrity. 
In the middle row, we take the same cat image and push it toward the flower marginal.
We then take the generated flower and push it towards the celebrity marginal.
We note that there are similarities in the semantic structure of the two generated images, despite the different paths taken along the simplex. 
Nevertheless, the loop does not close, and there are differences in the generated samples, such as in their hair. 
This suggests that different paths through the space of measures defined on the simplex have the potential to provide meaningful variations in output samples using the same learned marginal vector fields $g_k$.

\section{Outlook and Conclusion}
\label{sec:conc}
In this work, we introduced a generic framework for multimarginal generative modeling.
To do so, we used the formalism of stochastic interpolants, which enabled us to define a stochastic process on the simplex that interpolates between any pair of $K$ densities at the level of the samples.
We showed that this formulation decouples the problem of learning a velocity field that accomplishes a given dynamical transport from the problem of designing a path between two densities, which leads to a simple minimization problem for the path with lowest transport cost over a specific class of velocity fields.
We considered this minimization problem in the two-marginal case numerically, and showed that it can lower the transport cost in practice.
It would be interesting to apply the method to score-based diffusion, where significant effort has gone into heuristic tuning of the signal to noise ratio~\citep{karras_elucidating_2022}, and see if the optimization can recover or improve upon recommended schedules.
In addition, we explored how training multimarginal generative models impacts the generated samples in comparison to two-marginal couplings, and found that the incorporation of data from multiple densities leads to novel flows that demonstrate emergent properties such as style transfer.
Future work will consider the application of this method to problems in measurement decorruption from multiple sources.

\bibliographystyle{iclr2024_conference}
\bibliography{iclr2024_conference}

\clearpage
\appendix

\section{Omitted proofs}
\label{app:proofs}
\cont*
\begin{proof}
By definition of the barycentric interpolant $x(\alpha) = \sum_{k=0}^K \alpha_k x_k$, its characteristic function is given by
\begin{equation}
    \label{eq:charact}
    \E [e^{ik\cdot x(\alpha)}] = \int_{\R^d\times \R^d} e^{ik\cdot \sum_{k=1}^K x_k } \rho(x_1,\ldots. x_K) dx_1\cdots dx_K e^{-\frac12 \alpha_0^2 |k|^2}
\end{equation}
where we used $x_0\perp(x_1,\ldots, x_K)$ and $x_0\sim {\sf N}(0,\Id_d)$. The smoothness in $k$ of this expression guarantees that the distribution of $x(\alpha)$ has a density $\rho(\alpha,x)>0$. Using the definition of $x(\alpha)$ again, $\rho(\alpha,x)$ satisfies, for any suitable test function $\phi:\R^d \to \R$:
\begin{equation}
    \label{eq:test_def:pdf}
    \begin{aligned}
        &\int_{\R^d} \phi(x) \rho(t,x|\xi) dx= \int_{\R^d\times\cdots\times\R^d} \phi\left(x(\alpha)\right) \rho(x_1,\ldots,x_K) (2\pi)^{-d/2} e^{-\frac12 |x_0|^2} dx_0\cdots dx_K.
    \end{aligned}
\end{equation}
Taking the derivative with respect to $\alpha_k$ of both sides we get
\begin{equation}
    \label{eq:test_def:pdf:2}
    \begin{aligned}
        &\int_{\R^d} \phi(x) \partial_{\alpha_k}\rho(\alpha,x) dx\\
        &= \int_{\R^d\times\cdots\times\R^d} x_k \cdot \nabla \phi\left(x(\alpha)\right) \rho(x_1,\ldots,x_K) (2\pi)^{-d/2} e^{-\frac12 |x_0|^2} dx_0\cdots dx_K\\
        &= \int_{\R^d} \E\big[x_k \cdot \nabla \phi(x(\alpha))\big|x(\alpha)=x\big] \rho(\alpha,x) dx\\
        &= \int_{\R^d} \E[x_k | x(\alpha)=x] \cdot \nabla \phi(x) \rho(\alpha,x) dx
    \end{aligned}
\end{equation}
where we used  the chain rule to get the first equality, the definition of the conditional expectation to get the second, and the fact that  $\phi(x(\alpha))=\phi(x)$ since we condition on $x(\alpha)=x$ to get the third.  Since
\begin{equation}
    \label{eq:b:def:app}
    \E[x_k | x(\alpha)=x] = g_k(\alpha,x)
\end{equation}
by the definition of $g_k$ in~\eqref{eq:gk}, we can  write~\eqref{eq:test_def:pdf:2} as
\begin{equation}
    \label{eq:test_def:pdf:3}
    \begin{aligned}
        &\int_{\R^d} \phi(x) \partial_{\alpha_k}\rho(\alpha,x) dx = \int_{\R^d} g_k(\alpha,x)\cdot \nabla \phi(x) \rho(\alpha,x) dx\
    \end{aligned}
\end{equation}
This equation is~\eqref{eq:bary:si:CE:0} written in weak form.

To establish~\eqref{eq:score}, note that, if $\alpha_0>0$, we have
 \begin{equation}
 \label{eq:gbp}
 \E\big[x_0  e^{i\alpha_0 k \cdot x_0 }\big] = -\alpha_0^{-1}(i\partial_k)  \E\big[ e^{i\alpha_0 k \cdot x_0 }\big] = -\alpha_0^{-1}(i\partial_k)  e^{-\tfrac12 \alpha_0^2 |k|^2} = i \alpha_0k e^{-\tfrac12 \alpha_0^2 |k|^2}.
 \end{equation}
 As a result, using  $x_0\perp(x_1,\ldots,x_K)$, we have
\begin{equation}
 \label{eq:gbp:2}
 \E\big[x_0  e^{i k \cdot x(\alpha) }\big] = i \alpha_0k \E \big[e^{ik\cdot x(\alpha)}\big]
 \end{equation}
 Using the properties of the conditional expectation, the left-hand side of this equation  can be written as
\begin{equation}
 \label{eq:gbp:3}
 \begin{aligned}
 \E\big[x_0  e^{i k \cdot x(\alpha) }\big] & = \int_{\R^d} \E\big[x_0  e^{i k \cdot x(\alpha) }\big|x(\alpha)=x\big] \rho(\alpha,x) dx \\
 & = \int_{\R^d} \E[x_0  |x(\alpha)=x] e^{i k \cdot x } \rho(\alpha,x) dx \\
 & = \int_{\R^d} g_0(\alpha,x) e^{i k \cdot x } \rho(\alpha,x) dx
 \end{aligned}
 \end{equation}
 where we used the definition of $g_0$ in~\eqref{eq:gk} to get the last equality.
 Since the right-hand side of~\eqref{eq:gbp:2} is the Fourier transform of $-\alpha_0 \rho(\alpha,x)$, we deduce that
 \begin{equation}
 \label{eq:gbp:4}
 g_0(\alpha,x) \rho(\alpha,x) = -\alpha_0 \nabla \rho(\alpha,x) = -\alpha_0 \nabla \log \rho(\alpha,x) \, \rho(\alpha,x).
 \end{equation}
 Since $\rho(\alpha,x)>0$, this implies~\eqref{eq:score} when $\alpha>0$. 

 Finally, to derive~\eqref{eqn:g_obj}, notice that we can write
 \begin{equation}
        \label{eq:obj:cond}
        \begin{aligned}
            L_k(\hat g_k) &= \int_{\Delta^K} \E \left[|\hat g_k(\alpha,x(\alpha))|^2 - 2x_k\cdot \hat g_k(\alpha,x(\alpha))\right] d\alpha,\\
            &= \int_{\Delta^K} \int_{\R^d}\E \left[|\hat g_k(\alpha,x(\alpha))|^2 - 2x_k\cdot \hat g_k(\alpha,x(\alpha))|x(\alpha) = x\right] \rho(\alpha,x)dxd\alpha\\
            &= \int_{\Delta^K} \int_{\R^d} \left[ |\hat g_k(\alpha,x(\alpha))|^2 - 2\E[x_k|x(\alpha)]\cdot g_k(\alpha,x(\alpha))\right] \rho((\alpha,x)dxd\alpha\\
            &= \int_{\Delta^K} \int_{\R^d} \left[ |\hat g_k(\alpha,x(\alpha))|^2 - 2g_k(t,x,\xi)\cdot \hat g_k(\alpha,x(\alpha))\right] \rho(\alpha,x)dxd\alpha
        \end{aligned}
    \end{equation}
where we used the definition of $g_k$ in~\eqref{eq:gk}. The unique minimizer of this objective function is $\hat g_k(\alpha,x)=g_k(\alpha,x)$.

\end{proof}

\curve*
\begin{proof}
By definition  $\bar\rho(t=0,x)=\rho(\alpha(0),x)=\rho(e_i,x) = \rho_i(x)$ and $\bar\rho(t=1,x)=\rho(\alpha(1),x)=\rho(e_j,x) = \rho_j(x)$, so the boundary conditions in~\eqref{eq:bary:si:CE} are satisfied. To derive the transport equation in~\eqref{eq:bary:si:CE} use the chain rule as well as~\eqref{eq:bary:si:CE:0} to deduce
\begin{equation}
    \label{eq:transp:app}
    \partial_t \bar\rho(t,x) = \sum_{k=0}^K \dot \alpha_k(t) \partial_{\alpha_k} \bar\rho(\alpha,x) = -\sum_{k=0}^K \dot \alpha_k(t) \nabla \cdot\left(g_k(\alpha(t),x)  \bar\rho(\alpha,x)\right).
\end{equation}
This gives \eqref{eq:bary:si:CE} by definition of $b(t,x)$ in~\eqref{eq:b:def}. \eqref{eq:pfode} is the characteristic equation associated with~\eqref{eq:bary:si:CE} which implies the statement about the solution of this ODE.
\end{proof}

Note that, using the expression in~\eqref{eq:bary:si:CE} for $\nabla \log\rho(\alpha(t),x) = \nabla \log\bar\rho(t,x)$ as well as the identity $\Delta \bar\rho(t,x) = \nabla\cdot\left( \nabla \bar\rho(t,x) \, \bar\rho(t,x)\right)$ we can, for any $\eps(t) \ge 0$, write~\eqref{eq:bary:si:CE} as the Fokker-Planck equation (FPE)
\begin{equation}
    \label{eq:transp:app}
    \partial_t \bar\rho(t,x) + \nabla \cdot\left(\left(b(t,x)-\epsilon(t)\alpha_0^{-1}(t)  g_0(\alpha_0(t),x) \right)  \bar\rho(\alpha,x)\right) = \epsilon(t) \Delta \bar \rho(t,x).
\end{equation}
The SDE associated with this FPE reads
\begin{equation}
    \label{eq:sde}
    dX^F_t = \left(b(t,X^F_t)-\epsilon(t)\alpha_0^{-1}(t)  g_0(\alpha_0(t),X^F_t) \right) dt +\sqrt{2 \epsilon(t) }dW_t.
\end{equation}
As a result of the property of the solution $\bar \rho(t,x)$ we deduce that the solutions to the SDE~\eqref{eq:sde} are such that $X^F_{t=1}\sim \rho_j$ if $X^F_{t=0}\sim \rho_i$, which results in a generative model using a diffusion.

\transp*
\begin{proof}
The statement follows from the fact that the integral at the right-hand side of \eqref{eq:ot:cost} can be written as
\begin{equation}
        \label{eq:ot:cost:app}
        \begin{aligned}
            \int_0^1 \E \Big [ \Big|\sum_{k=0}^K \dot{\hat\alpha}_k(t) g_k(\hat \alpha(t), x(\hat \alpha(t))) \Big|^2 \Big] dt &= \int_0^1 \E \Big[ \big|\hat b(t, x(\hat \alpha(t))) \big|^2 \Big] dt\\
            &= \int_0^1 \int_{\R^d}\left|b(t, x) \right|^2  \hat \rho(t,x) dx dt
        \end{aligned}
    \end{equation}
    where $\hat b(t, x)=\sum_{k=0}^K \dot{\hat\alpha}_k(t) g_k(\hat \alpha(t), x) $ and $\hat \rho(t,x)$ solves the transport equation~\eqref{eq:bary:si:CE} with $b$ replaced by $\hat b$. This expression give the length in Wasserstein-2 metric of the path followed by this density, implying the statement of the corollary. 

\end{proof}

\section{Experimental Details}
\label{app:exp}

\subsection{Transport reduction experiments}
\label{app:exp:ot}
Corollary \ref{cor:transport:path} tells us that, because the problem of learning to transport between any two densities $\rho_i$, $\rho_j$ on the simplex can separated from a specific path on the simplex,  some of the transport costs associated to the probability flow \eqref{eq:pfode} can be reduced by choosing the optimal $\alpha(t)$. $\alpha(t)$ can be parameterized as a learnable function conditional on this parameterization maintaining the simplex constraint $\sum_k \alpha_k = 1$. 

In the 2D Gaussian to checkboard example provided in the main text, we fulfilled this by taking an $\tilde{\alpha_i}, \tilde{\alpha_j}$ of the form:
\begin{align}
\tilde \alpha_i = 1-t + \big( \sum_n^N a_{i,n} \sin(\tfrac{\pi}{2} t) \big)^2 \\
\tilde \alpha_j = t + \big( \sum_n^N a_{j,n} \sin(\tfrac{\pi}{2} t) \big)^2
\end{align}
where $a_{i,n}, a_{j,n}$ are learnable parameters. We also found the square on the sum, though necessary for enforcing $a_{i,j} \geq 0$, could be removed if N is not too large for faster optimization. 
For the case when the simplex contains more than two marginals ($k\neq i,j$), the additional $\alpha_k$ which do not specify an endpoint $i,j$ of the transport can be parameterized drop the first terms which enforce the boundary conditions so that 
\begin{align}
\tilde \alpha_{k\neq i,j} =  \big( \sum_n^N a_{k,n} \sin(\tfrac{\pi}{2} t) \big)^2 \\
\end{align}
The $\alpha_k$ $\forall k$ are normalized such that 
\begin{align}
    \alpha = \Big[ \tfrac{\tilde \alpha_0}{\sum \tilde \alpha_k}, \tfrac{\tilde \alpha_1}{\sum 
    \tilde \alpha_k}, \dots, \tfrac{ \tilde \alpha_K}{\sum \tilde \alpha_k} \Big].
\end{align}
As such, their time derivatives, necessary for using the ODE \eqref{eq:pfode} as a generative model, is given as, for any component $\alpha_k$
\begin{align}
    \dot \alpha_k = \frac{\dot{\tilde\alpha}_k \big ( \sum_{m\neq k} \tilde{\alpha}_m \big) - \tilde{\alpha}_k \big(\sum_{m\neq k} \dot{ \tilde{\alpha}}_m \big) }{ \big( \sum_k \tilde \alpha_k \big)^2}.
\end{align}

\begin{figure}
    \centering
    \includegraphics[width=0.5\linewidth]{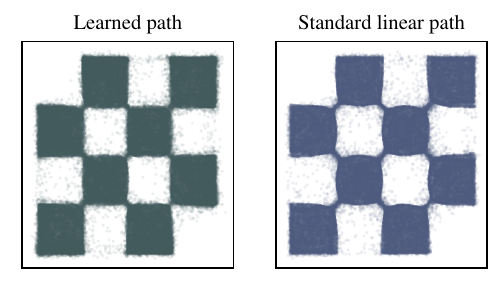}
    \caption{The output of the probability flow \eqref{eq:pfode} realized by for the learned interpolant for the checkerboard problem discussed in \cref{sec:OT} and \cref{app:exp:ot}. For fixed number of function evaluations, 5 steps of the midpoint integrator, the learned $\alpha$ is quicker to give a more accurate solution. }
    \label{fig:checker:nfe}
\end{figure}

\subsection{Architecture for image experiments}
\label{app:exp:add}
We adapt the standard U-Net architecture \citep{ho2020} to work with a vector $\alpha$ describing the "time coordinate" rather than a scalar $t$, which is the conventional coordinate label. The number of output channels from the network is given as $\text{ \# image channels} \times K$, where each $k$th slice of $\text{\# image channels}$ corresponds to the $k$th marginal vector field necessary for doing the probability flow.

\begin{table}[ht]
\centering
\begin{tabular}{lcc}
\hline &  MNIST  & AFHQ-Flowers-CelebA  \\
\hline Dimension & $32\times32$ & $64\times64\times3$\\

\# Training point &  50,000  & 15000, 8196, 200000 \\
\hline Batch Size & 500 & 180  \\
Training Steps  & 3$\times$10$^5$  & 1 $\times$ 10$^5$    \\
Hidden dim  & 128 & 256  \\
Attention Resolution  & 64 & 64  \\
Learning Rate (LR) & $0.0002$ & $0.0002$ \\
LR decay (1k epochs) & 0.995 & 0.995   \\
U-Net dim mult & [1,2,2] & [1,2,3,4]    \\
Learned $t$ sinusoidal embedding  & Yes & Yes  \\
$t_{0}, t_{f}$ when sampling with ODE & [0.0, 1.0] & [0.0, 1.0]  \\
EMA decay rate & 0.9999 & 0.9999 \\
EMA start iteration & 10000 & 10000  \\
$\#$ GPUs & 2 & 8 \\
\hline
\end{tabular}
\caption{Hyperparameters and architecture for image datasets.}
\label{tab:archs:img}
\end{table}

\subsection{Additional Image Results}
Here we provide some additional image results, applied to the image-to-image translation.

\begin{figure}
    \centering
    \includegraphics[width=0.8\linewidth]{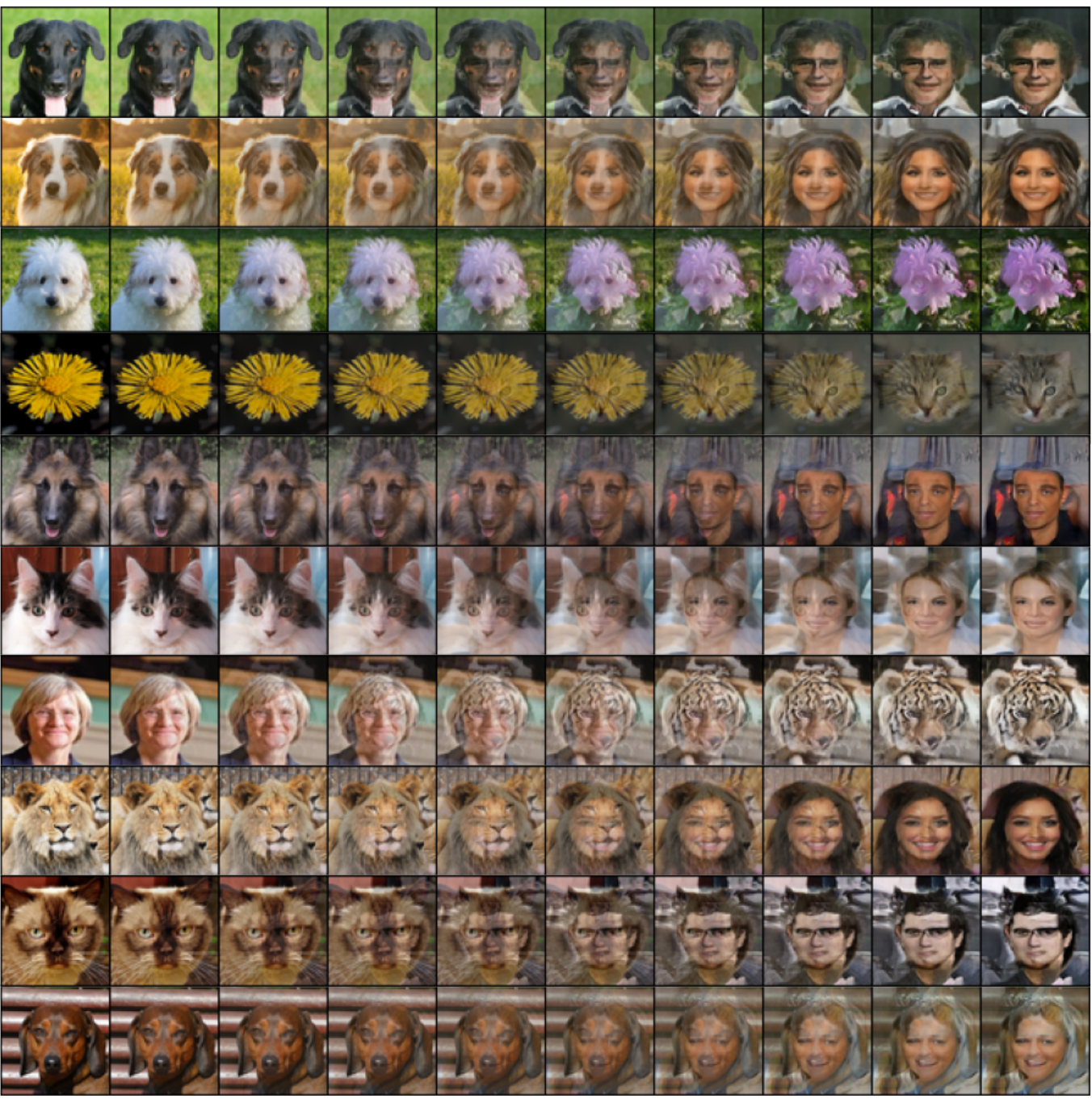}
    \caption{Random assortment of additional image translation examples built from the simplex of CelebA, the Oxford flowers dataset, and the animal faces dataset all at $64\times64$ resolution.}
    \label{fig:imgtoimg:many}
\end{figure}

\begin{figure}
    \centering
    \includegraphics[width=0.8\linewidth]{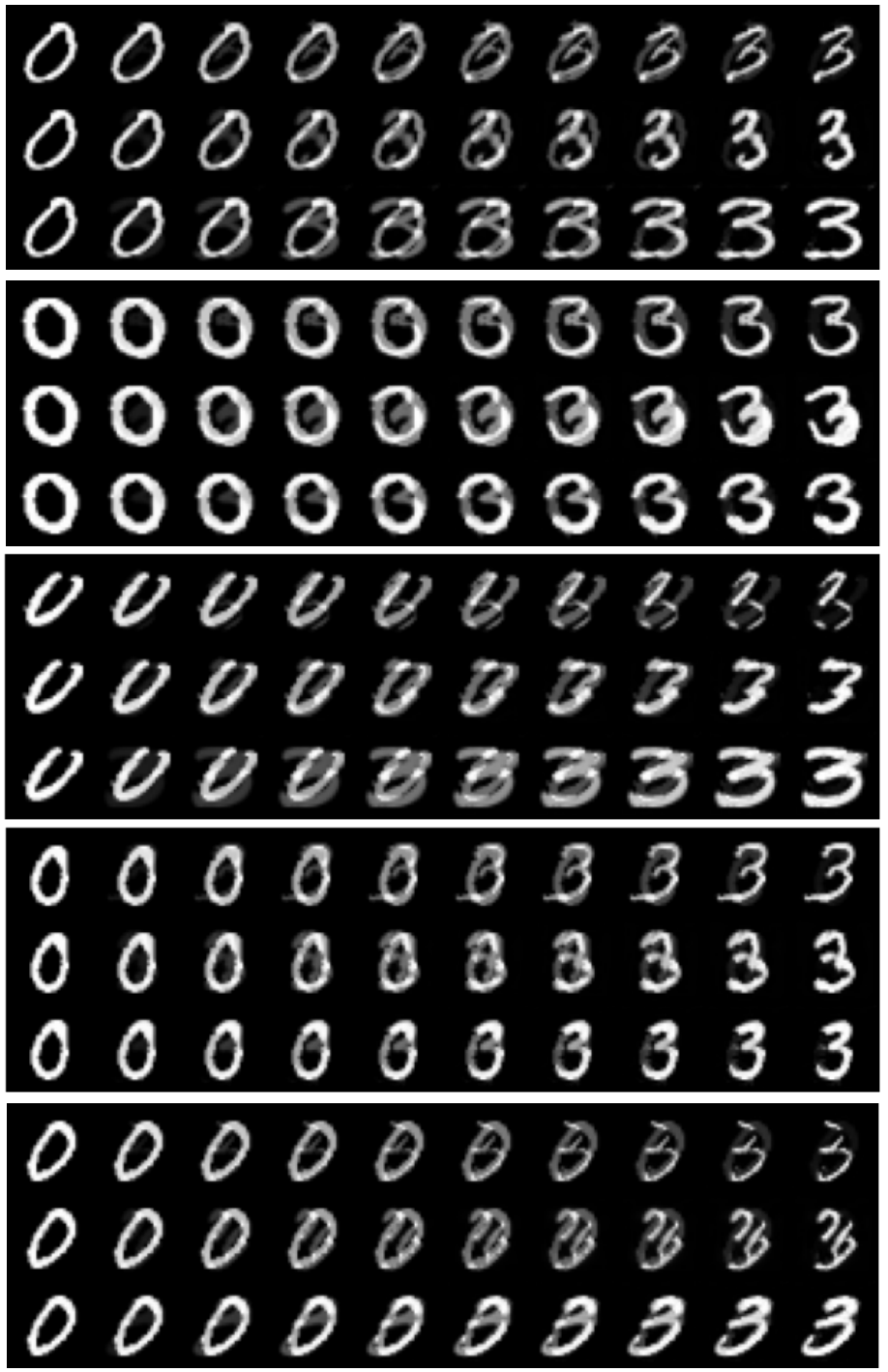}
    \caption{Random assortment of examples comparing the class-to-class translation of MNIST digits $0$ and $3$, where each row in a triplet corresponds to: top) model trained only on the edge between $0$ and $3$. middle) model trained on all edges between $0$ and $5$. bottom) model trained on the entire simplex.}
    \label{fig:mnist:many}
\end{figure}

\end{document}